\newcolumntype{P}[1]{>{\centering\arraybackslash}p{#1}}
\newcommand{\norm}[1]{\left\lVert#1\right\rVert}
\newcommand{\idot}[1]{\langle#1\rangle}
\newtheorem{theorem}{Theorem}[section]
\newtheorem{corollary}{Corollary}[theorem]
\newtheorem{lemma}[theorem]{Lemma}
\newtheorem{remark}{Remark}
\newtheorem{definition}{Definition}
\def\eqref#1{equation~\ref{#1}}
\def\Eqref#1{Equation~\ref{#1}}
\def\1{\bm{1}}
\def\va{{\bm{a}}}
\def\vh{{\bm{h}}}
\def\vr{{\bm{r}}}
\def\vu{{\bm{u}}}
\def\vv{{\bm{v}}}
\def\vw{{\bm{w}}}
\def\vx{{\bm{x}}}
\def\mA{{\bm{A}}}
\def\mD{{\bm{D}}}
\def\mI{{\bm{I}}}
\def\mM{{\bm{M}}}
\def\mW{{\bm{W}}}
\def\mX{{\bm{X}}}
\def\mY{{\bm{Y}}}
\def\mZ{{\bm{Z}}}
\DeclareMathAlphabet{\mathsfit}{\encodingdefault}{\sfdefault}{m}{sl}
\SetMathAlphabet{\mathsfit}{bold}{\encodingdefault}{\sfdefault}{bx}{n}
\def\gL{{\mathcal{L}}}
\def\gN{{\mathcal{N}}}
\def\sB{{\mathbb{B}}}
\def\sP{{\mathbb{P}}}
\def\sR{{\mathbb{R}}}
\newcommand{\E}{\mathbb{E}}
\newcommand{\Var}{\mathrm{Var}}
\DeclareMathOperator{\Tr}{Tr}
\title{Adversarial Noises Are Linearly Separable  \\for (Nearly) Random Neural Networks}
\def\@fnsymbol#1{\ensuremath{\ifcase#1\or \dagger\or \ddagger\or
   \mathsection\or \text{*} \or \mathparagraph\or \|\or **\or \dagger\dagger
   \or \ddagger\ddagger \else\@ctrerr\fi}}
\author{
Huishuai Zhang\thanks{Microsoft Research Asia. {\tt huzhang@microsoft.com}}
\and
 Da Yu\thanks{Sun Yat-sen University. {\tt yuda3@mail2.sysu.edu.cn}}
\and 
Yiping Lu\thanks{Stanford University. {\tt yplu@stanford.edu}}
\and
Di He\thanks{Peking University. {\tt dihe@pku.edu.cn}} \footnote{The work was done when Da Yu and Yiping Lu were interns at Microsoft Research Asia.}
}
\begin{document}

\maketitle

\begin{abstract}
Adversarial examples, which are usually generated for specific inputs with a specific model, are ubiquitous for neural networks. In this paper we unveil a surprising property of adversarial noises when they are put together, i.e., adversarial noises crafted by one-step gradient methods are linearly separable if equipped with the corresponding labels. We theoretically prove this property for a two-layer network with randomly initialized entries and the \emph{neural tangent kernel} setup where the parameters are not far from initialization. The proof idea is to show the label information can be efficiently backpropagated to the input while keeping the linear separability. Our theory and experimental evidence further show that the linear classifier trained with the adversarial noises of the training data can well classify the adversarial noises of the test data, indicating that adversarial noises actually inject a distributional perturbation to the original data distribution. Furthermore,  we empirically demonstrate that the adversarial noises  may become \emph{less} linearly separable when the above conditions are compromised while they are still much easier to classify than original features. 
 
\end{abstract}

\section{Introduction}
Modern deep learning models have achieved great accuracy on vast intelligence tasks. However at the same time, they have been demonstrated vulnerable to adversarial examples, i.e., imperceptible perturbations can significantly change the output of a neural network at test time. This hinders the applicability of deep learning model on safety-critical tasks \citep{biggio2013evasion,Szegedy2013}.

Adversarial example is usually generated via finding a perturbed sample that maximizes the loss, i.e., 
\begin{flalign}
\arg \max_{\vx'\in \sB(\vx,\epsilon)} \ell(\vx', y;\theta). \label{eq:generation}
\end{flalign}
One popular method to solve \Eqref{eq:generation} is the \emph{fast gradient sign method (FGSM)} algorithm \citep{goodfellow2014explaining}, i.e.\footnote{Here we consider the corresponding constraint in \Eqref{eq:generation} is $l_2$ ball.}, 
\begin{flalign}
\vx^{adv} = \vx + \eta \nabla_x \ell(\vx,y;\theta) \label{eq:fgsm}
\end{flalign}
with a suitable step size $\eta$. Recent work   explains why the FGSM-style algorithm is able to attack deep models \citep{montanari2022adversarial, bartlett2021adversarial, bubeck2021single, daniely2020most}. They show that for random neural networks, the output is roughly linear around the input sample. Then the high-dimensional statistics tell that a random weight vector has small inner product with a given input while at the same time a small perturbation can sufficiently change the output.

From the formulation in \Eqref{eq:generation} and \Eqref{eq:fgsm}, it is obvious that the adversarial example is sample $(\vx,y)$ specific and model ($\theta$) specific and most existing theoretical and empirical researches of adversarial examples are mainly about this setting. In this paper, we study the adversarial noises from a  population's perspective and ask

\begin{center}
{ \emph{``What property do the adversarial noises exhibit when they are put together?''}}
\end{center}

Due to the complicated procedure of generating adversarial noises, one might think they must be  scattered quite casually and disorderly. However, surprisingly, we observe that  adversarial noises are almost linearly separable if they are equipped with the labels of the corresponding targeted samples, i.e., a new constructed dataset $\{(\text{adversarial noise } i, \text{ label } i)\}_{i\in[n]}$ is linearly separable (as shown in Figure ~\ref{fig:separability-train-test}). This finding is   important for us to better understand the behavior of adversarial noises.

In this paper, we first study why such phenomenon happens. Specifically, we consider the adversarial noises generated by the FGSM-style algorithm. We theoretically prove that for a randomly-initialized two-layer neural network, the adversarial noises are linearly separable. We further prove that the linear separability also holds for the neural tangent kernel (NTK) regime where the weights are near the initialization. The proof idea is to show the label information or the error of last layer, which are separable initially, can be efficiently backpropagated  to the input and then conceive a linear classifier that can classify these adversarial noises  perfectly. We deal with the correlation between forward and backward process via the Gaussian conditioning technique \citep{bayati2011dynamics, yang2020tensor, montanari2022adversarial}. Throughout the proofs, we spend many efforts to obtain  high probability bounds. Such high probability bounds are not only stronger than expectation bounds in theoretical sense  but also critical to make the linear separability claim valid for all adversarial noises of the whole dataset. This is in contrast with previous studies \citep{bubeck2019adversarial,montanari2022adversarial} that are to understand example-specific property of adversarial noises, e.g. why an adversarial noise is imperceptible but able to attack successfully.

The theory indicates that the linear separability of adversarial noises actually are generalizable to the test set, which is also verified in Figure~\ref{fig:separability-train-test}. That is, a linear classifier trained on the adversarial noises of the training data points can  well classify the adversarial noises of the test data points as long as they follow the same procedure of generation. This means the adversarial noises actually inject a distributional perturbation to the original data distribution.
\begin{figure}
\centering
  \includegraphics[width=0.7\linewidth]{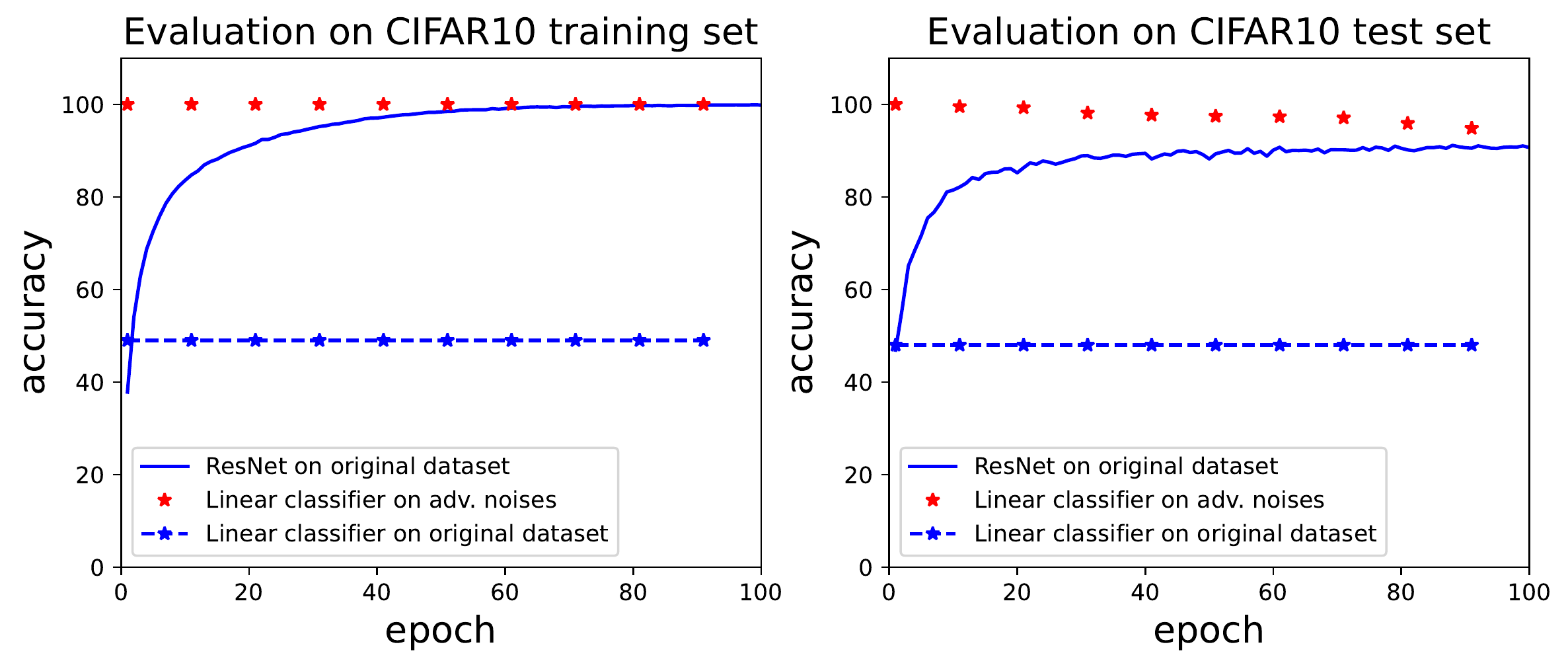}
  \caption{Training and test accuracy of \textbf{linear models} on adversarial noises, which are generated with ResNet-18 on CIFAR-10 over a standard training process of SGD with  $lr=0.001$.}
  \label{fig:separability-train-test}
\end{figure}

We also empirically explore the property of adversarial noises beyond the theoretical regime, especially for the case where the neural network is trained with a large learning rate,  the case of other adversarial noise generation algorithms, and  the case of adversarially trained models. Although the adversarial noises are not perfectly linearly separable in these wild scenarios, a consistent message is that they are  much easier to fit than original features, i.e., a linear classifier on  adversarial noises can achieve much higher accuracy  than  the best linear classifier on the original dataset. 


Overall, our contribution can be summarized as follows.
\begin{itemize}
    \item We unveil a surprising phenomenon that adversarial noises are almost linearly separable. 
    
    \item We theoretically prove the linearly separability of adversarial noises for (nearly) random two-layer networks.

    \item We show that the linear separability of adversarial noises may be compromised when going beyond the theoretical regime, but they are still much easier to classify than original features.
\end{itemize}

\subsection{Related work}
There are some explanations why adversarial examples exist, e.g., the deep network classifiers being too linear locally because of ReLU like activations \citep{goodfellow2014explaining}, the boundary tilting hypothesis that the classification boundary is close to the submanifold of the training data \citep{tanay2016boundary}, the isoperimery argument \citep{fawzi2018anyclassifier, shafahi2019adversarial} and the dimpled manifold model \citep{shamir2021dimpled}. There are also theoretical researches on the difficulty of adversarial learning difficulty, e.g., robust classifier requiring much more training data \citep{schmidt2018adversarially} and the computational intractability of building robust classifiers \citep{bubeck2019adversarial},  adversarial examples being features not bugs \citep{ilyas2019adversarial}. 

One related concept is the \emph{label leakage} \citep{kurakin2016adversarial} that  adversarial examples are crafted by using true label information  in the single-step gradient methods and  hence may be easier to classify. Our results greatly extend/verify this concept by showing the adversarial noises are linearly separable.

Our finding is also related with the concept \emph{shortcut learning} \citep{beery2018recognition,niven2019probing,geirhos2020shortcut} that deep models may rely on shortcuts to make predictions. Shortcuts are spurious features that are correlated with the label but not in a causal way. In this work, we show the linear separability makes adversarial noises perfect shortcut, which may hinder the classifier learns true features in the adversarial training.  Our study is inspired by the finding that the data poisoning for availability attack is adding simple features \citep{yu2021indiscriminate} to the training data. We focus on the adversarial noises and analyze their linear-separability theoretically.

\section{Problem Setup and Notations}

We study the distribution of adversarial noises of neural networks. Although the study is not constrained to specific networks, we analyze a simplified model to ease the technical exposure.

Specifically, we consider a two-layer neural network with input dimension $d$ and width $m$:
\begin{equation}
    f(\vx;\va, \mW) = \va^\top \sigma(\mW\vx), \label{eq:2layerNN}
\end{equation}
where $\sigma$ is the ReLU activation function which is applied coordinate-wisely, input  $\vx \in \sR^d$, $\mW \in \sR^{m\times d}$, and readout layer $\va\in \sR^m$. The network parameters are initialized as follows.  Each entry of $\mW$ is independently generated from $\gN(0, 1/d)$, and each entry of $\va$ is  independently generated from $\gN(0,1/m)$. Moreover, $\mW$ and $\va$ are independent from each other. We use a new notation $\theta$ to represent the whole trainable parameters in the network, i.e., here $\theta = \{\mW, \va\}$.

We consider binary classification task with a dataset $\{(\vx_i, y_i)\}_{i\in [n]}$, where $\vx_i\in \sR^d$ and $y_i\in \{-1,+1\}$ for $i=1, ..., n$. We use a negative log sigmoid loss, i.e.,
\begin{flalign}
 \ell(y f(\vx) )= -\log s(yf(\vx)),
\end{flalign}
where $s(z) = \frac{1}{1+e^{-z}}$ is the sigmoid function. 

We consider the one-step gradient method to generate  the adversarial noise, i.e., 
\begin{flalign}
 \vr_x = \frac{\partial \ell(\vx)}{\partial \vx} = - (1-s(yf(\vx))) y\nabla_x f(\vx), \label{eq:advnoise}
\end{flalign}
where  $\nabla_x f(\vx)$ is the gradient with respect to the input and we may omit the subscript when it is clear from the context. For the two-layer neural network (\Eqref{eq:2layerNN}), this gradient is given by 
\begin{flalign}
\nabla f(\vx) = \mW^\top \mD_x \va,
\end{flalign}
where $\mD_x \in \sR^{m\times m}$ is a diagonal matrix and the diagonal entries are given by  $\sigma'(\mW\vx)$. The adversarial example is given by 
\begin{flalign}
\vx^{adv} = \vx + \eta \vr_x, \label{eq:advexample}
\end{flalign}
where $\eta$ is step size has magnitude $O(1)$. Here we assume the ball constraint in \Eqref{eq:generation} is measured in $l_2$ distance and hence the projection can be removed. It is  interesting and important to extend the analysis to other distances which are empirically verified in Section \ref{sec:exp}

 We next state the mathematical definition of linear separability for a binary-label dataset.

\begin{definition}[Linearly separable]
We say a set $\{\xi_i, y_i\}_{i\in[n]}$ with $y_i \in \{+1,-1\}$ linearly separable if $\exists \vv$ such that $\forall i : \langle \vv,y_i\xi_i\rangle >0$.
\end{definition}

\textbf{Notations.} In the seuqel, we use $\Vert \vx\Vert$ to denote the $\ell_2$ norm of a vector $\vx$, We use $\norm{\mM}_2$ and $\norm{\mM}_F$ to denote the spectral norm and the Frobenius norm of a matrix $\mM$, respectively. The learning process is to minimize the average loss  $\gL(\theta)=\sum_{i=1}^{n}\ell(\theta;\vx_i,y_i)/n$.

Besides, we also define the following notations to describe the bounds we derived. We write $f(\cdot)=\mathcal{O}(g(\cdot))$, $f(\cdot)=\Omega(g(\cdot))$ to denote $f(\cdot)/g(\cdot)$ is upper or lower bounded by a positive constant. We use $f(\cdot)=\Theta(g(\cdot))$ to denote that $f(\cdot)=\Omega (g(\cdot))$ and $f(\cdot)=\Omega (g(\cdot))$. 

\section{Provable Linear Separability of Adversarial Noises}

It has been shown that the adversarial noises generated by \Eqref{eq:advnoise} are small while being able to change the output significantly \citep{montanari2022adversarial,bartlett2021adversarial}. We next show that the adversarial noises  exhibit surprising linearly-separable phenomenon when put together. In this section we first analyze why such phenomenon exists for randomly initialized network. Then we extend the analysis to the NTK setting.

\subsection{Linear Separability at Initialization}
We next claim that for a two-layer network at its initialization, the  adversarial noises are linearly separable if equipped with corresponding labels, i.e., $\{\vr_{x_i}, y_i\}_{i=1}^n$ is linearly separable.

\begin{theorem}\label{thm:initialization-results}
For the two-layer network given by \Eqref{eq:2layerNN} and the adversarial noises $\{\vr_{x_i}\}_{i=1}^n$ generated by \Eqref{eq:advnoise}, there exists $\vv$ such that $\forall i\in[n], \langle\vv, y_i\vr_{x_i}\rangle > 0$ with high probability. Specifically $\vv = - \mW^\top \va$ serves this purpose with probability at least $1-3Cn(e^{-c_1 d}+e^{-c_2 m})$ where $C, c_1, c_2$ are some constants.
\end{theorem}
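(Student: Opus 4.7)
My plan is as follows. I start by plugging the candidate direction $\vv=-\mW^\top\va$ into the target inner product. Using $\nabla f(\vx_i)=\mW^\top\mD_{x_i}\va$ and the form of $\vr_{x_i}$ in \Eqref{eq:advnoise}, the signs and the factor $y_i^2=1$ cancel and the strictly positive sigmoid prefactor $c_i:=1-s(y_if(\vx_i))\in(0,1)$ factors out, giving $\langle\vv,y_i\vr_{x_i}\rangle = c_i\,\va^\top\mW\mW^\top\mD_{x_i}\va$. Thus the whole problem reduces to showing that the quadratic form
$$S_i:=\va^\top\mW\mW^\top\mD_{x_i}\va$$
is strictly positive simultaneously for every $i\in[n]$, with the claimed probability.

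Next I exploit the crucial fact that $\va$ is independent of $\mW$. Conditioning on $\mW$ freezes $\mD_{x_i}$ and turns $S_i$ into a Gaussian quadratic form in $\va\sim\gN(\mathbf 0,\mI_m/m)$, whose conditional mean is
$$\mathbb{E}_\va[S_i\mid\mW]=\frac{1}{m}\tr(\mW\mW^\top\mD_{x_i})=\frac{1}{m}\sum_{k:\,\vw_k^\top\vx_i>0}\|\vw_k\|^2,$$
where $\vw_k\in\sR^d$ denotes the $k$-th row of $\mW$. Rotating coordinates so that $\vx_i=\|\vx_i\|\ve_1$, a one-line computation splits $\|\vw_k\|^2$ into $w_{k1}^2+\sum_{j\ge 2}w_{kj}^2$ and yields $\mathbb{E}\bigl[\|\vw_k\|^2\mathbf{1}\{\vw_k^\top\vx_i>0\}\bigr]=1/2$. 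Since the $m$ summands are i.i.d.\ in $k$ and subexponential, a Bernstein bound gives $\mathbb{E}_\va[S_i\mid\mW]\ge 1/4$ with probability at least $1-Ce^{-cm}$ over $\mW$.

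For the fluctuation I apply the Hanson--Wright inequality to the symmetrization $\mA_i:=\tfrac12(\mW\mW^\top\mD_{x_i}+\mD_{x_i}\mW\mW^\top)$. Standard Gaussian matrix concentration gives $\|\mW\|_2=\Theta(1)$ with probability at least $1-Ce^{-cd}$, hence $\|\mA_i\|_2=O(1)$ and $\|\mA_i\|_F^2=O(\min(m,d))$. After rescaling $\va=\vz/\sqrt{m}$ with $\vz\sim\gN(\mathbf 0,\mI_m)$, Hanson--Wright then produces
$$\mathbb{P}_\va\bigl(|S_i-\mathbb{E}_\va[S_i\mid\mW]|>1/8\bigr)\le 2e^{-cm}.$$
Intersecting the three events yields $S_i\ge 1/8>0$ for any fixed $i$ outside a set of probability $\le C(e^{-c_1d}+e^{-c_2m})$, and a union bound over $i\in[n]$ gives the theorem.

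The main obstacle is the entanglement between the ReLU activation pattern $\mD_{x_i}$ and the Gram matrix $\mW\mW^\top$, both of which are built from the same $\mW$ and coupled through $\vx_i$. The escape is to marginalize $\va$ first: because $\va\perp\mW$, the $\va$-expectation of $S_i$ collapses into a clean i.i.d.\ sum of $m$ scalar functionals of the rows of $\mW$, and this independence is precisely what delivers the $\Theta(1)$ lower bound without having to track the correlation structure explicitly. The spectral and Hanson--Wright steps then need only coarse estimates, and the two independent concentration rates $e^{-cd}$ (Gaussian concentration in the $d$-dimensional rows of $\mW$) and $e^{-cm}$ (i.i.d.\ summation over $m$ neurons and quadratic-form concentration in $\va$) account for the two exponents in the stated probability bound.
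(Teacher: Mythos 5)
Your reduction to showing $S_i=\va^\top\mW\mW^\top\mD_{x_i}\va>0$ is exactly the paper's starting point, but from there you take a genuinely different route. The paper attacks the correlation between $\mW$ and $\mD_{x_i}$ head-on: it invokes the Gaussian conditioning lemma of Montanari--Wu to split $\mW$ into its component along $\vx_i$ plus an independent copy $\bar{\mW}$, shows the rank-one cross term $\frac{1}{d}\va^\top\mW\vx_i\vx_i^\top\mW^\top\mD_{x_i}\va$ is $O(\sqrt{d}\cdot\sqrt{d})/d$-small via Gaussian tail bounds, and then handles the decoupled piece $\va^\top\bar{\mW}\bar{\mW}^\top\mD_{x_i}\va$ by a further decomposition into $\norm{\bar{\mW}^\top\mD\va}^2$ (lower-bounded via $\chi^2$ tails) plus a cross term $\va^\top(\mI-\mD)\bar{\mW}\bar{\mW}^\top\mD\va$ (upper-bounded via sub-exponential Bernstein). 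You instead sidestep the decoupling entirely by conditioning on $\mW$, which freezes $\mD_{x_i}$ and turns $S_i$ into a Gaussian chaos in $\va$; the conditional mean collapses to $\frac{1}{m}\sum_k\norm{\vw_k}^2\mathbf{1}\{\vw_k^\top\vx_i>0\}$, an i.i.d.\ sum over rows with mean $1/2$ (your rotation argument for the independence of $\norm{\vw_k}^2$ and $\sign(\vw_k^\top\vx_i)$ is correct), and Hanson--Wright controls the fluctuation. This is cleaner and avoids importing the conditioning lemma; the paper's route, by contrast, produces explicit constants ($S_i>1/32$) that it then reuses verbatim in the NTK theorem. One imprecision to fix: $\norm{\mW}_2=\Theta(1)$ holds only when $m=O(d)$; for an $m\times d$ matrix with $\gN(0,1/d)$ entries one has $\norm{\mW}_2=\Theta(1+\sqrt{m/d})$, which matters since the companion NTK result takes $m\ge\Omega(poly(n)\,d)$. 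This does not sink the argument --- with $\norm{\mA_i}_2=O(m/d)$ and $\norm{\mA_i}_F^2\le\mathrm{rank}(\mA_i)\norm{\mA_i}_2^2=O(m^2/d)$, the Hanson--Wright exponent is still $\Omega(\min(m,d))$, matching the two exponential terms in the stated probability --- but you should state the spectral bound in the general regime rather than assuming $m\asymp d$.
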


The adversarial noise  in \Eqref{eq:advnoise}  is  $\vr_{x_i} = -(1-s(yf(\vx)))y_i\nabla f(\vx_i)$, where $1-s(yf(\vx))>0$ because of the sigmoid function. Hence it is sufficient to show that $\langle -\vv, \nabla f(\vx_i)\rangle > 0$ holds for all $i$. Next we give a proof outline and the full derivation is deferred to Appendix \ref{app:thm:initialization-results}.

\begin{proof}[Proof Outline]
To give an intuitive idea why the claim is probably true, for a generic input $\vx$ and $\vv=-\mW^\top \va$, we calculate the expectation and the variance of $\langle-\vv, \nabla f(\vx)\rangle > 0$ for a simplified case: {$\mD_x$ is  random and independent from all others $\{\mW, \va, \vx\}$.} We are safe to ignore the subscript for this case.

Because of the property of ReLU activation, we further assume that the diagonal entries of $\mD$ are  independently and randomly sampled from $\{0,1\}$ with equal probability, i.e., for all $k\in[m]$
\[
\mD(k,k) = \begin{cases}
0, \;\; \text{ with probability } 0.5, \\
1, \;\; \text{ with probability } 0.5.
\end{cases}
\]
Then we can compute $\E \langle -\vv, \nabla f(\vx)\rangle$ and $\Var \langle -\vv, \nabla f(\vx)\rangle$ as follows,
\begin{flalign}
&\E \langle -\vv, \nabla f(\vx)\rangle = \E [\va^\top\mW \mW^\top \mD \va]=\E \left[\Tr(\va\va^\top\mW \mW^\top \mD)\right] \nonumber\\
&= \Tr\left(\E[\va\va^\top]\E[\mW \mW^\top] \E[\mD]\right) = \Tr \left( \frac{1}{m} \mI_{m\times m} \cdot \mI_{m\times m} \cdot \frac{1}{2} \mI_{m\times m}\right)= \frac{1}{2}, 
\end{flalign}
and 
\begin{flalign}
&\Var \langle -\vv, \nabla f(\vx)\rangle = \E \left(\va^\top\mW \mW^\top \mD \va\right)^2 - \left(\E [\va^\top\mW \mW^\top \mD \va]\right)^2, \nonumber\\
&=\left(\frac{1}{4} + \frac{5}{4m} + \frac{1}{d} + \frac{2}{md}\right) -\frac{1}{4} = \frac{5}{4m} + \frac{1}{d} + \frac{2}{md}
\end{flalign}
We note that the computation of $\E \left(\va^\top\mW \mW^\top \mD \va\right)^2$ is quite complicated and heavily relies on the property of $\va, \mW$ being Gaussian and the independence between $\va,\mW$ and $\mD$. By Chebyshev inequality, we can show that $\langle -\vv, \nabla f(\vx)\rangle > \frac{1}{2}- \delta$ with  probability at least $1-\frac{2}{\delta^2 d}$ assuming that $m>1.2d+2$. Taking the union bound, we can prove the claim holds with probability $1-n \frac{2}{\delta^2 d}$. Thus, it requires $d\gg n$ to claim that the theorem holds with high probability. To obtain a tighter bound, it requires more elaborate concentration inequality, which is deferred to Appendix \ref{app:thm:initialization-results}.

Next we consider the case {where $\mD_x$ are exactly $\sigma'(\mW \vx)$.} 

This makes the analysis a bit harder as the $\mW, \mW^\top$ and $\mD_x$ are correlated. We prove the claim via the technique of probability concentration and Gaussian conditioning \citep{yang2020tensor,montanari2022adversarial}. We use a lemma as follows.

\begin{lemma}[Lemma 3.1 in \citep{montanari2022adversarial}] \label{lem:gaussian-conditioning}
Let $\mX \in \sR^{m \times d}$ which has i.i.d. standard Gaussian entries, and $\mA_1 \in \sR^{k_1 \times m}, \mA_2 \in \sR^{d \times k_2}$. Let $\mY = h_1(\mA_1 \mX, \mX \mA_2, \mZ_1)$ with $\mZ_1$ independent of $\mX$, $\mA_2 = h_2(\mA_1 \mX, \mZ_2)$ with $\mZ_2$ independent of $\mX$. We assume that $(\mA_1, \mZ_1, \mZ_2)$ is independent of $\mX$. Then there exists $\tilde{\mX} \in \sR^{m \times d}$ which has the same distribution with $\mX$ and is independent of $\mY$, such that
\begin{align*}
	\mX = \Pi_{\mA_1}^{\perp}\tilde{\mX} \Pi_{\mA_2}^{\perp} + \Pi_{\mA_1}^{\perp}{\mX} \Pi_{\mA_2} + \Pi_{\mA_1}{\mX} \Pi_{\mA_2}^{\perp} + \Pi_{\mA_1}{\mX} \Pi_{\mA_2},
\end{align*} 
where $\Pi_{\mA_1} \in \sR^{m \times m}$ is the projection operator projecting onto the subspace spanned by the rows of $\mA_1$, $\Pi_{\mA_2} \in \sR^{d \times d}$ is the projection operator projecting onto the subspace spanned by the columns of $\mA_2$, and $\Pi_{\mA_1}^{\perp} := \mI_m - \Pi_{\mA_1}$, $\Pi_{\mA_2}^{\perp} := \mI_d - \Pi_{\mA_2}$. 
\end{lemma}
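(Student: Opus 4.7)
The plan is to reduce the lemma to the trivial identity $\mX = (\Pi_{\mA_1}+\Pi_{\mA_1}^\perp)\mX(\Pi_{\mA_2}+\Pi_{\mA_2}^\perp)$, expanded into the four cross-terms stated, and then show that only the piece $\Pi_{\mA_1}^\perp\mX\Pi_{\mA_2}^\perp$ can be freely resampled. The other three summands are measurable functions of $(\mA_1,\mA_2,\mA_1\mX,\mX\mA_2)$ via $\Pi_{\mA_1}\mX = \mA_1^{+}(\mA_1\mX)$ and $\mX\Pi_{\mA_2} = (\mX\mA_2)\mA_2^{+}$, and $\mY = h_1(\mA_1\mX,\mX\mA_2,\mZ_1)$ already lies in the $\sigma$-algebra $\gG := \sigma(\mA_1,\mZ_1,\mZ_2,\mA_1\mX,\mX\mA_2)$. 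So the entire content of the lemma boils down to constructing an iid-Gaussian $\tilde{\mX}$ satisfying $\Pi_{\mA_1}^\perp\tilde{\mX}\Pi_{\mA_2}^\perp = \Pi_{\mA_1}^\perp\mX\Pi_{\mA_2}^\perp$ together with $\tilde{\mX}\perp\mY$; by the previous observation, it suffices to show the free piece is independent of $\gG$.

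I would establish this independence by two staged applications of matrix Gaussian conditioning. Stage one conditions on $\gH_1 := \sigma(\mA_1,\mZ_1,\mZ_2,\mA_1\mX)$: since $(\mA_1,\mZ_1,\mZ_2)\perp\mX$ and orthogonal row-projections of an iid Gaussian matrix are independent, the conditional law of $\Pi_{\mA_1}^\perp\mX$ given $\gH_1$ is Gaussian on the $\Pi_{\mA_1}^\perp$-subspace with covariance independent of the realized value of $\mA_1\mX$. Crucially, at this point $\mA_2=h_2(\mA_1\mX,\mZ_2)$, and hence $\Pi_{\mA_2}$, is $\gH_1$-measurable, so it can be treated as a constant in what follows. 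Stage two conditions further on $\mX\mA_2$, which — since $\Pi_{\mA_1}\mX\mA_2$ is $\gH_1$-measurable — reduces to conditioning on $\Pi_{\mA_1}^\perp\mX\mA_2$. A second Gaussian-conditioning step on the column side decomposes $\Pi_{\mA_1}^\perp\mX$ into independent Gaussians $\Pi_{\mA_1}^\perp\mX\Pi_{\mA_2}$ and $\Pi_{\mA_1}^\perp\mX\Pi_{\mA_2}^\perp$, the former $\gG$-measurable and the latter Gaussian with covariance on the range of $\Pi_{\mA_1}^\perp\otimes\Pi_{\mA_2}^\perp$ that is independent of the particular value of $\gG$. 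This is exactly the desired conditional independence.

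To finish, I would enlarge the probability space with an iid-Gaussian matrix $\mG\in\sR^{m\times d}$ independent of $(\mX,\mA_1,\mZ_1,\mZ_2)$ and set
\[
\tilde{\mX} := \Pi_{\mA_1}^\perp\mX\Pi_{\mA_2}^\perp + \Pi_{\mA_1}\mG + \Pi_{\mA_1}^\perp\mG\Pi_{\mA_2}.
\]
Substituting $\Pi_{\mA_1}^\perp\tilde{\mX}\Pi_{\mA_2}^\perp = \Pi_{\mA_1}^\perp\mX\Pi_{\mA_2}^\perp$ into the trivial decomposition reproduces $\mX$ identically. The three summands of $\tilde{\mX}$ live on pairwise-orthogonal sub-blocks of the entry-space with Gaussian covariances summing to the identity and are mutually independent (by construction for the two $\mG$-terms, by the Gaussian argument for the $\mX$-term), so $\tilde{\mX}$ has the same iid standard Gaussian law as $\mX$. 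Its $\mG$-components are independent of $\mY$ because $\mG$ is fresh, and its $\mX$-component has $\gG$-conditional law independent of $\gG$ and therefore is independent of the $\gG$-measurable $\mY$; combining these, $\tilde{\mX}\perp\mY$.

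The main obstacle is the circular-looking dependence $\mA_2 = h_2(\mA_1\mX,\mZ_2)$: because $\Pi_{\mA_2}$ is itself a function of $\mX$, one cannot fix $\mA_2$ and invoke a single matrix Gaussian conditioning step. The two-stage procedure is designed precisely to break this circularity — the first stage pins down $\mA_2$ by conditioning on $\mA_1\mX$ while leaving the conditional distribution of $\Pi_{\mA_1}^\perp\mX$ undisturbed, after which the second stage can legitimately treat $\Pi_{\mA_2}$ as deterministic. Apart from this $\sigma$-algebra bookkeeping, the proof reduces to the standard fact that orthogonal projections of an iid Gaussian matrix onto orthogonal subspaces are independent Gaussians.
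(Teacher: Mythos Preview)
The paper does not supply its own proof of this lemma; it merely cites Appendix~A.1 of \citet{montanari2022adversarial}. Your proposal is a correct and complete sketch of the standard Gaussian conditioning argument: the four-block decomposition is the obvious identity, three of the blocks are $\gG$-measurable via the pseudoinverse expressions you give, and the two-stage conditioning correctly handles the circularity in $\mA_2 = h_2(\mA_1\mX,\mZ_2)$ by first freezing $\mA_2$ through $\gH_1$ before splitting on the column side. The construction of $\tilde{\mX}$ via a fresh $\mG$ and the verification that its $\gG$-conditional law is i.i.d.\ standard Gaussian (hence $\tilde{\mX}\perp\gG\supseteq\sigma(\mY)$) are both sound. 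This is essentially the argument in the cited reference, so there is nothing to contrast.
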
	
The proof of Lemma~\ref{lem:gaussian-conditioning} is in Appendix A.1 of \citet{montanari2022adversarial}. By using Lemma~\ref{lem:gaussian-conditioning}, where  plugging in $\mX\leftarrow \mW$, $\mA_1 \leftarrow 0, \mA_2 \leftarrow \vx$, $\mY \leftarrow \mD_x$ and $\Pi_x = \frac{1}{d}\vx\vx^\top$, we have
\begin{flalign}
\va^\top \mW \mW^\top \mD_x \va
=& \frac{1}{d}\va^\top \mW \vx\vx^\top \mW^\top \mD_x\va + \va^\top \tilde{\mW}\Pi_x^\perp  \tilde{\mW}^\top \mD_x \va \nonumber\\
=& \frac{1}{d}\va^\top \mW \vx\vx^\top \mW^\top \mD_x \va + \va^\top \bar{\mW}\bar{\mW}^\top \mD_x \va, \label{eq:term-with-x}
\end{flalign}
where $\tilde{\mW}$ has the same marginal distribution as $\mW$ and is independent of $\mD_x, \va$, and $\bar{\mW}\in \sR^{m\times (d-1)}$ has i.i.d. Gaussian entries with mean 0 and variance $\frac{1}{d}$ and is independent of $\mD_x, \va$. 

For the first term in \Eqref{eq:term-with-x}, let $\vh = \mW\vx$, then with high probability $\norm\vh \approx \sqrt{m}$ and $\norm{\mD_x\vh} \approx \sqrt{m/2}$. Given $\vh$, we have $\va^\top\vh \sim \gN(0, \norm\vh^2/m)$ and  $\vh^\top\mD_x\va \sim \gN(0, \norm{\mD_x\vh}^2/m)$. Then we have 
\begin{flalign}
\frac{1}{d}\left|\va^\top \mW \vx\vx^\top \mW^\top \mD_x \va\right| =\frac{1}{d}\left|\va^\top \vh \vh^\top \mD_x \va\right| 
= \frac{1}{d}\left|\va^\top \vh|\cdot|\vh^\top \mD_x \va\right|. \label{eq:term-with-x1}
\end{flalign}

We can bound the right hand side of \Eqref{eq:term-with-x1} with high probability by using the following two lemmas. 


\begin{lemma} \label{lem:hnormbound}
Suppose $\norm{\vx} = \sqrt{d}$ and $\mW$ is a Gaussian matrix with entry variance $1/d$. Let $\vh = \mW\vx$, then we have 
\begin{flalign}
\sP\{\norm{\vh}^2 < 2 m\} >1-e^{m/7}.
\end{flalign}
\end{lemma}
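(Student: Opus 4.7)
The plan is to reduce the statement to a classical concentration bound for a chi-squared random variable. First I would observe that since every entry of $\mW$ is an independent $\mathcal{N}(0,1/d)$ Gaussian and $\vx$ is deterministic with $\norm{\vx}^2 = d$, each coordinate $\vh_k = \sum_{j=1}^d \mW_{kj}\vx_j$ is marginally $\mathcal{N}(0,\norm{\vx}^2/d)=\mathcal{N}(0,1)$. The rows of $\mW$ are independent, so $\vh_1,\dots,\vh_m$ are i.i.d.\ standard Gaussian. Hence $\norm{\vh}^2 = \sum_{k=1}^m \vh_k^2$ is exactly $\chi^2_m$-distributed, and the lemma reduces to showing $\sP\{\chi^2_m \geq 2m\} \leq e^{-m/7}$.

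Next I would apply a standard Chernoff bound using the chi-squared moment generating function. For $\lambda \in (0,1/2)$, $\E[\exp(\lambda \chi^2_m)] = (1-2\lambda)^{-m/2}$, so Markov's inequality gives
\begin{equation*}
\sP\{\chi^2_m \geq 2m\} \leq \exp\!\left(-2\lambda m - \tfrac{m}{2}\log(1-2\lambda)\right).
\end{equation*}
Optimizing the exponent over $\lambda$ yields $\lambda=1/4$, producing the bound $\exp\!\left(-\tfrac{m}{2}(1-\log 2)\right)$. Since $\tfrac{1}{2}(1-\log 2) \approx 0.153 > 1/7$, this gives the claimed $e^{-m/7}$ tail (alternatively, a Laurent--Massart inequality could be plugged in directly with $x$ chosen so that $2\sqrt{mx}+2x=m$, giving the same quality of bound). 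There is essentially no obstacle here; the only subtlety is verifying the numerical inequality $\tfrac{1}{2}(1-\log 2) \geq 1/7$, and one could even loosen the constant $2$ in the statement slightly to make the exponent cleaner without affecting any downstream usage. The lemma will be invoked together with a similar concentration estimate for $\norm{\mD_x \vh}$ to control the rank-one contribution in \eqref{eq:term-with-x1}.
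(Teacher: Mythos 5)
Your proof is correct and follows essentially the same route as the paper: both reduce $\norm{\vh}^2$ to a $\chi^2_m$ variable and apply the standard Chernoff/MGF tail bound, arriving at the identical constant $(2e^{-1})^{m/2}=\exp\left(-\tfrac{m}{2}(1-\log 2)\right)<e^{-m/7}$; your explicit check that $\tfrac{1}{2}(1-\log 2)>1/7$ is exactly the step the paper leaves implicit. (Note the statement's exponent $e^{m/7}$ is a sign typo for $e^{-m/7}$, as the appendix version confirms.)
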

The proof of this lemma is based on the tail bound of $\chi^2$ distribution.

\begin{lemma} \label{lem:innerproduct}
Suppose $\norm{\vx} = \sqrt{d}$, $\mW$ is a Gaussian matrix with entry variance $1/d$ and $\va$ is a Gaussian vector with entry variance $1/m$. Let $\vh = \mW\vx$ and $\mD_x = \sigma'(\mW\vx)$. Then with probability at least $1-e^{-m/7}-4e^{-c_2d/4}$, we have 
\begin{flalign}
|\va^\top\vh| < \sqrt{c_2d}, \;\;\; |\va^\top\mD_x\vh| < \sqrt{c_2d},
\end{flalign}
where $c_2$ is some constant.
\end{lemma}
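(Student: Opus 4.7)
The plan is to condition on $\mW$ (equivalently, on $\vh = \mW\vx$ and $\mD_x = \sigma'(\vh)$) and exploit the independence of $\va$ from $\mW$. Since $\va \sim \mathcal{N}(0, \mI_m/m)$ is independent of $\vh$ and $\mD_x$, conditional on $\mW$ the scalars $\va^\top\vh$ and $\va^\top \mD_x \vh$ are one-dimensional Gaussians with zero mean and variances $\|\vh\|^2/m$ and $\|\mD_x\vh\|^2/m$, respectively. The whole statement then reduces to controlling these two conditional variances and applying a Gaussian tail bound.

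First I would invoke Lemma~\ref{lem:hnormbound} to get the high-probability event $\gE := \{\|\vh\|^2 < 2m\}$, which holds with probability at least $1-e^{-m/7}$. On $\gE$ we automatically have $\|\mD_x\vh\|^2 \le \|\vh\|^2 < 2m$ since $\mD_x$ is a diagonal $0/1$ projection. Consequently, conditional on $\gE$, both $\va^\top\vh$ and $\va^\top\mD_x\vh$ are centered Gaussians with variance at most $2$. Then by the standard tail estimate $\Pr(|Z|>t) \le 2e^{-t^2/(2\sigma^2)}$ for $Z\sim\gN(0,\sigma^2)$, choosing $t=\sqrt{c_2 d}$ gives
\begin{flalign*}
\Pr\!\left(|\va^\top\vh| \ge \sqrt{c_2 d} \,\big|\, \mW \right) \le 2e^{-c_2 d/4}, \qquad
\Pr\!\left(|\va^\top \mD_x \vh| \ge \sqrt{c_2 d} \,\big|\, \mW \right) \le 2e^{-c_2 d/4}
\end{flalign*}
whenever the conditioning $\mW$ lies in $\gE$.

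Finally I would union-bound over the three bad events (the complement of $\gE$, and the two Gaussian tail events conditional on $\gE$), which yields failure probability at most $e^{-m/7}+2e^{-c_2 d/4}+2e^{-c_2 d/4} = e^{-m/7}+4e^{-c_2 d/4}$, matching the claimed bound. The constant $c_2$ here is a free positive constant inherited from the Gaussian tail; any fixed choice works, and one may pick it to match the constant used in Theorem~\ref{thm:initialization-results}.

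I do not anticipate a serious obstacle: the entire argument hinges on the elementary fact that $\va$ is independent of $(\mW,\vx)$, so conditioning on $\mW$ converts the nonlinear quantities $\va^\top\vh$ and $\va^\top\mD_x\vh$ into one-dimensional Gaussians. The only point requiring a small amount of care is making sure the conditioning is valid even though $\mD_x$ depends on $\mW$; this is fine because we condition on the full $\sigma$-algebra generated by $\mW$, under which $\mD_x$ and $\vh$ are both deterministic, while $\va$ remains standard Gaussian.
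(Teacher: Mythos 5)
Your proposal is correct and follows essentially the same route as the paper's proof: condition on $\mW$ so that $\va^\top\vh$ and $\va^\top\mD_x\vh$ become centered Gaussians with variances $\norm{\vh}^2/m$ and $\norm{\mD_x\vh}^2/m$, bound both variances by $2$ on the event $\{\norm{\vh}^2<2m\}$ (noting $\norm{\mD_x\vh}\le\norm{\vh}$ since $\mD_x$ is a $0/1$ diagonal), and finish with a Gaussian tail estimate and a union bound giving $1-e^{-m/7}-4e^{-c_2d/4}$. The only cosmetic difference is that you cite Lemma~\ref{lem:hnormbound} for the $\chi^2$ tail whereas the paper re-derives it inline.
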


Thus if choosing $c_2 <1/64$, we prove that the \Eqref{eq:term-with-x1} smaller than $1/64$ with probability at least $1- n(e^{-m/7}+e^{-d/256})$.

For the second term in \Eqref{eq:term-with-x}, we can use the result for the case where $\mD_x$ is indepdent of $\mW\vx, \va$ and have a lower bound for it. Combining these two parts together, we prove the Theorem \ref{thm:initialization-results} with high probability.
\end{proof}

We have shown that the linear separability of adversarial noises for network at its random initialization. Then one question is whether the adversarial examples are linearly separable, i.e., if there exists one $\vv'$ such that $\langle \vv', y_i \vx_i^{adv}\rangle > 0 $ for all $i\in [n]$. This is true if the input dimension and the network width are much larger than the number of input samples. In this case we can find a linear classifier that lives in a subspace perpendicular to the linear space spanned by $\{\vx_i\}_{i=1}^n$. 
\begin{corollary}\label{cor:advresult}
For the two-layer network defined in \Eqref{eq:2layerNN} and the adversarial samples given by $\vx_i^{adv} = \vx_i + \eta \vr_i$, if $d>poly(n)$ there exists $\vv'= -\Pi_\mX^\perp \mW^\top \va$ such that $\forall i: \langle \vv', \vx_i^{adv} \rangle > 0$ with high probability. 
\end{corollary}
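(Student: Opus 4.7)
The plan is to reduce the corollary to Theorem~\ref{thm:initialization-results} plus a perturbation bound of size $O(\sqrt{n/d})$, which is $o(1)$ whenever $d > \mathrm{poly}(n)$. First I would split
\[
\langle \vv', y_i\vx_i^{adv}\rangle \;=\; \langle \vv', y_i\vx_i\rangle + \eta\,\langle \vv', y_i\vr_i\rangle.
\]
Since $\vv' = -\Pi_\mX^\perp\mW^\top\va$ lives in the orthogonal complement of $\mathrm{span}\{\vx_1,\dots,\vx_n\}$ while each $\vx_i$ lies inside that span, the first inner product vanishes identically. For the second, substituting $\vr_i = -(1-s(y_if(\vx_i)))\,y_i\,\mW^\top\mD_{x_i}\va$ and using that the sigmoid factor is strictly positive reduces the entire claim to showing that $\va^\top\mW\Pi_\mX^\perp\mW^\top\mD_{x_i}\va > 0$ for every $i$ simultaneously.

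Next I would write $\Pi_\mX^\perp = \mI - \Pi_\mX$ to split that quadratic form into a \emph{main term} $\va^\top\mW\mW^\top\mD_{x_i}\va$, which is already lower-bounded by a positive constant with high probability in the proof of Theorem~\ref{thm:initialization-results}, and a \emph{correction term} $\va^\top\mW\Pi_\mX\mW^\top\mD_{x_i}\va$ that must be shown to be $o(1)$. By Cauchy--Schwarz the correction is bounded by $\|\Pi_\mX\mW^\top\va\|\cdot\|\Pi_\mX\mW^\top\mD_{x_i}\va\|$. For the first factor, $\Pi_\mX = \mX(\mX^\top\mX)^{-1}\mX^\top$ is deterministic of rank $n$, and the vector $\mX^\top\mW^\top\va$ has $j$-th entry $\va^\top\vh_j$ with $\vh_j = \mW\vx_j$, which is $O(1)$ with high probability by Lemma~\ref{lem:innerproduct}; combined with $\|(\mX^\top\mX)^{-1}\|_2 = O(1/d)$ (under the standard normalization $\|\vx_j\|=\sqrt d$ and a well-conditioned Gram matrix) this yields $\|\Pi_\mX\mW^\top\va\|^2 = O(n/d)$. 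The second factor is handled in the same fashion after a Gaussian-conditioning step that decouples $\mD_{x_i}$ from the components of $\mW$ projecting onto $\mathrm{span}(\mX)^\perp$. Together these bounds give $|\va^\top\mW\Pi_\mX\mW^\top\mD_{x_i}\va| = O(\sqrt{n/d})$ with high probability, after which a union bound over $i\in[n]$ concludes.

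The main obstacle I anticipate is the statistical coupling between $\Pi_\mX$ (which depends on all training data), $\mW$, and the activation mask $\mD_{x_i}$ (which depends on $\mW\vx_i$) when bounding the second Cauchy--Schwarz factor. This is exactly the structure Lemma~\ref{lem:gaussian-conditioning} is designed to handle: applying it with $\mA_2 \leftarrow \mX$ exposes an independent copy of $\mW$ restricted to $\mathrm{span}(\mX)^\perp$, after which the residual concentration calculations mirror those already performed for Theorem~\ref{thm:initialization-results}. A mild regularity hypothesis on the data (essentially a well-conditioned Gram matrix) is implicitly required and is subsumed by the $d>\mathrm{poly}(n)$ assumption together with standard random-matrix concentration when the $\vx_i$ themselves come from an isotropic distribution.
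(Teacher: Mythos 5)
Your opening reduction (orthogonality of $\vv'$ to $\mathrm{span}\{\vx_1,\dots,\vx_n\}$ kills $\langle\vv',y_i\vx_i\rangle$; positivity of the sigmoid factor reduces the claim to $\va^\top\mW\Pi_\mX^\perp\mW^\top\mD_{x_i}\va>0$) is exactly the paper's. You diverge in how you handle the projected quadratic form. The paper applies the Gaussian-conditioning lemma (Lemma~\ref{lem:gaussian-conditioning}) once, with $\mA_2\leftarrow\mX$, to rewrite $\va^\top\mW\Pi_\mX^\perp\mW^\top\mD_{x_i}\va$ as $\va^\top\bar{\mW}\bar{\mW}^\top\mD_{x_i}\va$ for a fresh Gaussian matrix $\bar{\mW}\in\sR^{m\times(d-n)}$ independent of $\mD_{x_i}$ and $\va$, and then reruns the Theorem~\ref{thm:initialization-results} argument in ambient dimension $d-n$; no correction term and no Gram-matrix hypothesis ever appear. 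You instead write $\Pi_\mX^\perp=\mI-\Pi_\mX$, reuse the Theorem~\ref{thm:initialization-results} lower bound verbatim for the main term, and control the rank-$n$ correction by Cauchy--Schwarz. That route is legitimate and has the virtue of making the role of $d>\mathrm{poly}(n)$ quantitatively visible, but it is strictly more work: the second Cauchy--Schwarz factor still forces you through the same Gaussian-conditioning step, and you add an extra concentration argument on top.

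There is one concrete slip you should fix. Lemma~\ref{lem:innerproduct} does \emph{not} give $|\va^\top\vh_j|=O(1)$; it gives $|\va^\top\vh_j|<\sqrt{c_2 d}$ (the bound is deliberately loose so as to fail with probability only $e^{-\Omega(d)}$). Plugging that bound into your estimate yields $\|\Pi_\mX\mW^\top\va\|^2=O(n)$ rather than $O(n/d)$, and the correction term would then swamp the constant lower bound on the main term. The repair is immediate and also removes your well-conditioned-Gram-matrix assumption: conditionally on $\va$, $\mW^\top\va\sim\gN\bigl(0,\tfrac{\|\va\|^2}{d}\mI_d\bigr)$, so $\|\Pi_\mX\mW^\top\va\|^2\stackrel{d}{=}\tfrac{\|\va\|^2}{d}\chi^2_n=O(n/d)$ with probability $1-e^{-\Omega(n)}$, with no inversion of $\mX^\top\mX$ required (work with an orthonormal basis of the span). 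With that substitution, and the analogous treatment of the second factor after conditioning, your argument goes through.
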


\begin{proof}
The idea is that we can make the classifier staying in the orthogonal subspace of $\mX\mX^\top$ while can still linearly separates the adversarial samples. 

We note that $\idot{\vv', \vx^{adv}} = \idot{\vv', \vx} + \idot{\vv', -\eta (1-p)\nabla f(\vx)} =  \idot{\vv', -\eta (1-p)\nabla f(\vx)}$. We next prove with high probability
\begin{flalign}
 \va^\top \mW\Pi_\mX^\perp\mW^\top \mD_x \va >0. 
\end{flalign}
The above is indeed true because we can use Gaussian conditioning, i.e., 
\begin{flalign}
 \va^\top \mW\Pi_\mX^\perp\mW^\top \mD_x \va  = \va^\top \bar{\mW}\bar{\mW}^\top \mD_x \va,
\end{flalign}
where $\bar{\mW}\in \sR^{m\times (d-n)}$ with $i.i.d.$ Gaussian entries with mean $0$ and variance $1/d$. Then following the argument in the proof of Theorem \ref{thm:initialization-results}, we complete the proof.
\end{proof} 

\begin{remark}
If $d$ is not larger than $n$, then there may not exist a valid $v'$ in Corollary \ref{cor:advresult}.
\end{remark}
In this setting, there is not enough randomness in $\mW$ to exploit. One possible choice is to increase the energy of the adversarial signal (by increasing the step size of \Eqref{eq:advexample}) to overcome the effect of the original input $\vx$. By choosing $\eta = d^{1/4}$, the adversarial noise is still small compared with the original signal, i.e., $\frac{\norm{\vx^{adv}-\vx}}{\norm{\vx}} = O(d^{-1/4})$ but the effect of the adversarial noise overweighs that of the original signal, i.e., 
$\frac{|\idot{-\mW^\top\va,  \vx^{adv}-\vx}|}{|\idot{-\mW^\top\va, \vx}|} = O(d^{1/4})$. Thus the adversarial examples may still be linearly separable in this case.

 
\subsection{Linear Separability in NTK Regime}

We have established the linear separability of adversarial noises for two-layer networks at initialization. 
In this section, we study the behavior of the adversarial noises when the network is slightly trained, i.e., the weights are not far from initialization. By the convergence theory of training neural network in \emph{Neural Tangent Kernel} (NTK) regime,  the network parameter  can  fit the training data perfectly even in a small neighborhood around initialization as long as the width of the network is large enough  \citep{jacot2018neural,allen2018convergence, du2018gradient, chizat2018global,  zou2018stochastic, zhang2019convergence}. A typical result reads as follows, which we adapt to our notations.



\begin{lemma}[Theorem 1 in \citep{allen2018convergence}] \label{lem:ntkconvergence}
Suppose a two-layer neural network defined by \Eqref{eq:2layerNN} and a distinguishable dataset with $n$ data points. If the network width $m \ge \Omega(poly(n)\cdot d)$, starting from random initialization $\theta$, with probability at least $1-e^{\Omega(\log^2 m)}$, then gradient descent with learning rate $\Theta\left(\frac{d}{poly(n)}\right)$ finds $\{\mW^*, \va^*\}$ such that $\gL(\mW^*, \va^*) \le \epsilon$ and $\norm{\mW^*-\mW}_2\le \frac{1}{\sqrt{m}}$ and $\norm{\va^*-\va}\le \frac{1}{\sqrt{m}}$.
\end{lemma}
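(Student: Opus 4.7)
The plan is to reproduce the now-standard overparameterization analysis for two-layer ReLU networks: a local linearization argument around random initialization, combined with a spectral lower bound on the neural tangent kernel (NTK) Gram matrix and a loss-descent induction.

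First, I would form the tangent feature map $\phi(\vx) := \nabla_\theta f(\vx; \theta_0)$ at $\theta_0 = \{\mW, \va\}$ and the associated NTK Gram matrix $\mH \in \sR^{n\times n}$ with entries $\mH_{ij} = \idot{\phi(\vx_i), \phi(\vx_j)}$. Using Gaussian Lipschitz concentration for the finite-width NTK together with the assumed distinguishability of the data (no two normalized inputs are parallel with conflicting labels), I would show that with probability at least $1-e^{-\Omega(\log^2 m)}$ over the initialization, the smallest eigenvalue of $\mH$ is lower bounded by a data-dependent constant $\lambda_0>0$. The positive homogeneity and a.s.\ non-parallel gradient structure of ReLU is what furnishes this spectral gap.

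Second, I would establish a local approximation property: for any $\theta'$ with $\norm{\theta'-\theta_0} = O(1/\sqrt{m})$, only an $O(1/\sqrt{m})$ fraction of the $m$ ReLU activation patterns flip relative to $\theta_0$. This has two consequences, both uniform in the ball $B(\theta_0, 1/\sqrt{m})$: the training Jacobian $\partial f/\partial \theta$ is close to its value at initialization (so the empirical NTK retains its spectral lower bound $\lambda_0/2$), and the training objective $\gL$ satisfies a semi-smoothness inequality of the form $\gL(\theta') \le \gL(\theta)+\idot{\nabla \gL(\theta),\theta'-\theta}+\frac{L}{2}\norm{\theta'-\theta}^2 + \text{small error}$, with $L$ depending on $m$ in a controlled way.

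Third, I would run gradient descent with step size $\eta = \Theta(d/\mathrm{poly}(n))$ and inductively maintain two invariants: (a) $\theta_t \in B(\theta_0, 1/\sqrt{m})$, and (b) $\gL(\theta_{t+1}) \le (1-c\eta\lambda_0)\gL(\theta_t)$. Invariant (b) follows by combining the semi-smoothness bound with the NTK lower bound, which converts gradient norm into loss via $\norm{\nabla \gL(\theta_t)}^2 \gtrsim \lambda_0 \gL(\theta_t)$. Summing the per-step displacements $\norm{\theta_{t+1}-\theta_t} \le \eta\norm{\nabla \gL(\theta_t)}$ as a geometric series controls the total movement by $O(1/\sqrt{m})$ for $m$ large enough, which closes the induction and yields $\gL(\theta_T)\le \epsilon$ after $T = \mathrm{poly}(n)\log(1/\epsilon)$ steps.

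The main obstacle, I expect, is the simultaneous balancing act between $m$, $\lambda_0$, and the trajectory radius: the budget $m \ge \Omega(\mathrm{poly}(n)\cdot d)$ must be large enough to (i) preserve $\lambda_{\min}(\mH)\gtrsim \lambda_0$ after perturbation, (ii) bound the fraction of flipped ReLU patterns \emph{uniformly} over all $\theta'$ in the ball, not merely pointwise, and (iii) ensure the accumulated weight movement actually stays inside $1/\sqrt{m}$ along the worst-case GD trajectory. The uniform-in-$\theta'$ control of activation flips is the step that forces the high polynomial dependence on $n$ and is where the delicate concentration/union-bound work lives.
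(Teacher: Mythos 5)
The paper offers no proof of this lemma: it is imported wholesale as Theorem~1 of \citet{allen2018convergence} (with the probability exponent containing a sign typo --- it should read $1-e^{-\Omega(\log^2 m)}$, as you correctly wrote). So there is no in-paper argument to compare against; the relevant comparison is with the cited reference. Your sketch is a faithful reconstruction of the standard overparameterized-convergence argument, but it blends two distinct lineages: the Gram-matrix route (defining $\mH_{ij}=\idot{\phi(\vx_i),\phi(\vx_j)}$ and lower-bounding $\lambda_{\min}(\mH)$ by a data-dependent $\lambda_0$) is the Du--Lee--Li--Wang--Zhai approach, whereas Allen-Zhu et al.\ instead prove a direct gradient lower bound of the form $\norm{\nabla\gL(\theta)}^2\ge \Omega(\delta/\mathrm{poly}(n))\cdot\gL(\theta)$ using the data-separation parameter $\delta$ (their ``distinguishable dataset'' assumption), paired with the semi-smoothness inequality you describe. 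Either route works; the Gram-matrix version makes the role of the NTK explicit, while the Allen-Zhu version avoids controlling an $n\times n$ eigenvalue under perturbation. Your identification of the uniform-over-the-ball control of activation flips as the delicate step is exactly right and is where the $\mathrm{poly}(n)$ width requirement originates.

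Two caveats, both inherited from the paper's loose restatement rather than introduced by you. First, the loss here is the negative log sigmoid, for which $|\ell'(z)|\asymp \ell(z)$ as $\ell(z)\to 0$; the PL-type conversion then yields $\norm{\nabla\gL(\theta)}^2\gtrsim \lambda_0\,\gL(\theta)^2$ rather than $\lambda_0\,\gL(\theta)$, so the clean geometric decay $\gL(\theta_{t+1})\le(1-c\eta\lambda_0)\gL(\theta_t)$ in your invariant (b) does not hold verbatim --- Allen-Zhu et al.'s cross-entropy statement is correspondingly about reaching small gradient or perfect classification, not $\gL\le\epsilon$ in a fixed number of steps. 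Second, the displacement bound $\norm{\mW^*-\mW}_2\le 1/\sqrt{m}$ as stated suppresses $\mathrm{poly}(n,1/\delta)$ factors that appear in the original theorem; your geometric-series accounting of $\sum_t\norm{\theta_{t+1}-\theta_t}$ would surface these, and they must be absorbed into the width requirement $m\ge\Omega(\mathrm{poly}(n)\cdot d)$ for the induction to close.
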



Based on this result of NTK convergence, we can see that when the loss is minimized, the learned parameters are still very close to the initialization especially as the width becomes large. Thus, it is possible for us to show that the adversarial noises at the NTK solution are linear separable. 
\begin{theorem}\label{thm:ntk-results}
For the two-layer network defined in \Eqref{eq:2layerNN}, the NTK solution $\{\mW^*, \va^*\}$ satisfying Lemma \ref{lem:ntkconvergence}, and the adversarial noises $\{\vr_i\}_{i=1}^n$ given by \Eqref{eq:advnoise}, there exists $\vv$ such that $\forall i: \langle \vv, y_i\vr_i\rangle > 0$. Specifically $\vv = -\mW^\top \va$ serves this purpose  with high probability at least $1-Cn e^{-\Omega({m/\log^2 m}) - cd }$ for some constants $C,c$.
\end{theorem}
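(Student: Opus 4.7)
The plan is to reuse the separator $\vv = -\mW^\top \va$ constructed from the \emph{initial} weights in the proof of Theorem~\ref{thm:initialization-results}, and show it continues to separate the adversarial noises at the NTK solution because $\nabla f^*(\vx_i) = (\mW^*)^\top \mD^*_{x_i}\va^*$ stays close to $\nabla f(\vx_i) = \mW^\top \mD_{x_i}\va$ throughout the near-initialization regime. Since each $\vr_i$ is a positive multiple of $-y_i\nabla f^*(\vx_i)$, it suffices to show that for every $i\in[n]$,
\begin{equation*}
\langle -\vv,\, \nabla f^*(\vx_i)\rangle \;=\; \underbrace{\langle -\vv,\, \nabla f(\vx_i)\rangle}_{(\mathrm{I})} \;+\; \underbrace{\langle -\vv,\, \nabla f^*(\vx_i)-\nabla f(\vx_i)\rangle}_{(\mathrm{II})} \;>\; 0 .
\end{equation*}
Theorem~\ref{thm:initialization-results} already provides a $\Theta(1)$ lower bound on $(\mathrm{I})$ with probability at least $1 - Cn(e^{-c_1 d} + e^{-c_2 m})$, so the remaining task is to prove $|(\mathrm{II})| = o(1)$ uniformly in $i$ with the stated additional failure probability.

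To bound $(\mathrm{II})$, I would write $\mW^* = \mW + \Delta\mW$, $\va^* = \va + \Delta\va$, $\mD^*_{x_i} = \mD_{x_i} + \Delta\mD_{x_i}$, and expand
\begin{equation*}
\nabla f^*(\vx_i) - \nabla f(\vx_i) \;=\; (\Delta\mW)^\top\mD^*_{x_i}\va^* \;+\; \mW^\top(\Delta\mD_{x_i})\va^* \;+\; \mW^\top\mD_{x_i}\Delta\va \;+\; (\text{higher order in } \Delta).
\end{equation*}
Lemma~\ref{lem:ntkconvergence} gives $\norm{\Delta\mW}_2, \norm{\Delta\va} \le 1/\sqrt{m}$, and standard Gaussian concentration yields $\norm{\vv}, \norm{\va^*} = \Theta(1)$ and $\norm{\mW}_2 = O(\sqrt{m/d})$ on an event of failure probability $e^{-\Omega(d)} + e^{-\Omega(m)}$. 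Pairing each perturbation term with $\vv$ and using Cauchy--Schwarz together with the identity $\langle\vv,\mW^\top \vu\rangle = \langle \mW\vv, \vu\rangle$ when convenient, the first term contributes $O(\norm{\vv}\norm{\Delta\mW}_2\norm{\va^*}) = O(1/\sqrt{m})$ and the third contributes $O(\norm{\mW\vv}\norm{\Delta\va}) = O(\sqrt{m/d}\cdot 1/\sqrt{m}) = O(1/\sqrt{d})$; the remaining cross terms are of strictly lower order.

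The main obstacle is the middle term $\langle \vv, \mW^\top (\Delta\mD_{x_i})\va^*\rangle$, because the diagonal sign-change matrix $\Delta\mD_{x_i}$ is not controlled by $\norm{\Delta\mW}_2$ alone and must be analyzed via the number of ReLU activation flips. The idea is to bound $\norm{(\Delta\mD_{x_i})\va^*}^2 = \sum_{k:\text{flipped}}(a^*_k)^2$ using anti-concentration of the standard-normal pre-activation $\mW_k\vx_i$: neuron $k$ flips only if $|\mW_k\vx_i| \le |(\Delta\mW)_k\vx_i|$, an event of small probability controlled by $\norm{(\Delta\mW)_k}$ and $\norm{\vx_i}$. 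A flip-counting argument in the spirit of \citet{allen2018convergence} (the same reference as Lemma~\ref{lem:ntkconvergence}) gives $\norm{(\Delta\mD_{x_i})\va^*} = O(m^{-1/4})$ with probability at least $1 - e^{-\Omega(m/\log^2 m)}$, whence $|\langle\vv, \mW^\top(\Delta\mD_{x_i})\va^*\rangle| \le \norm{\mW\vv}\cdot\norm{(\Delta\mD_{x_i})\va^*} = O(m^{1/4}/\sqrt{d})$, which is $o(1)$ in the NTK scaling $m \ge \Omega(\mathrm{poly}(n)\cdot d)$ assumed by Lemma~\ref{lem:ntkconvergence}. A union bound over $i\in[n]$ then collects all failure probabilities into the claimed form $1 - Cn\,e^{-\Omega(m/\log^2 m) - cd}$, concluding the proof.
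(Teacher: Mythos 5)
Your overall strategy is the same as the paper's: keep the initialization separator $\vv=-\mW^\top\va$, expand $\nabla f^*(\vx_i)-\nabla f(\vx_i)$ in $\Delta\mW$, $\Delta\mD_{x_i}$, $\Delta\va$, and control each perturbation term. Your bounds on the $\Delta\mW$ term ($O(1/\sqrt m)$) and the $\Delta\va$ term ($O(1/\sqrt d)$) match the paper's \Eqref{eq:ntk-expansion2} and \Eqref{eq:ntk-expansion4}.

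However, your treatment of the activation-flip term has a genuine gap. You bound $|\langle\vv,\mW^\top(\Delta\mD_{x_i})\va^*\rangle|\le\norm{\mW\vv}\cdot\norm{(\Delta\mD_{x_i})\va^*}=O(\sqrt{m/d}\cdot m^{-1/4})=O(m^{1/4}/\sqrt d)$ and declare this $o(1)$ under the NTK scaling $m\ge\Omega(\mathrm{poly}(n)\cdot d)$. But that scaling makes $m$ \emph{large} relative to $d$, so $m^{1/4}/\sqrt d$ is not small --- it grows without bound as the width increases, which is exactly the regime Lemma~\ref{lem:ntkconvergence} lives in. The culprit is the factor $\norm{\mW}_2=O(\sqrt{m/d})$ that Cauchy--Schwarz forces on you; any bound of the form $\norm{\mW\vv}\cdot\norm{\vu}$ inherits it. The paper avoids this by invoking Lemma~7.3 of \citet{allen2018convergence}, which says that for \emph{sparse} $\vu$ with $\norm{\vu}_0\le O(m/\log^2 m)$ one has $|\va^\top\mW\mW^\top\vu|\le 2\norm{\vu}$ with probability $1-e^{-\Omega(m)}$ --- i.e., the quadratic form restricted to sparse directions does not pay the spectral norm of $\mW$. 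Combined with the flip-count bound $\norm{\Delta\mD_{x_i}}_0\le m/\log^2 m$ (which, with $|a_k|\approx 1/\sqrt m$, gives $\norm{(\Delta\mD_{x_i})\va}\le 1/\log m$ rather than your $m^{-1/4}$), this yields the dimension-free estimate $1/\log m$ in \Eqref{eq:ntk-expansion3}. Without some such sparsity-aware quadratic-form bound, your argument for the middle term does not close.
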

\begin{proof}[Proof Outline]
The proof relies on that the NTK solution is very close to the initialization. We ignore the $1-s(yf(\vx))$ term and only  calculate
\begin{flalign}
&\langle \mW^\top \va, \nabla_x f(\vx;\mW^*,\va^*)\rangle = \va^\top \mW \mW^{*\top} \mD_x^* \va^* \nonumber\\
=& \va^\top \mW \mW^{\top} \mD_x \va + \va^\top \mW (\Delta\mW)^{\top} \mD_x \va +\va^\top \mW \mW^{\top} (\Delta\mD_x) \va + \va^\top \mW \mW^{\top} \mD_x (\Delta \va) \nonumber\\
& + \va^\top \mW\left(\Delta\mW^{\top} \Delta\mD_x \va + \Delta\mW^{\top} \mD_x \Delta\va +\mW^{\top} \Delta\mD_x \Delta\va + \Delta\mW^{\top}  \Delta\mD_x \Delta \va\right)\label{eq:ntk-expansion}
\end{flalign}
where $\Delta \mW = \mW^* - \mW, \Delta \va = \va^* - \va$ and $\Delta \mD_x = \mD_x^* - \mD_x$. 

For the first term of \Eqref{eq:ntk-expansion}, by Theorem~\ref{thm:initialization-results} we have with high probability
\begin{flalign}
 \va^\top \mW \mW^{\top} \mD_x \va>1/32. \label{eq:ntk-expansion1}
\end{flalign}
For the second term of \Eqref{eq:ntk-expansion}, we note that with high probability $\norm{\mD_x\va} \in  (\frac{1}{2}-\delta , \frac{1}{2}+\delta)$ and $\norm{\va^\top \mW} \in (1-\delta, 1+\delta)$, and hence 
\begin{flalign}
 |\va^\top \mW (\Delta\mW)^{\top} \mD_x \va|\le O(\frac{1}{\sqrt{m}}). \label{eq:ntk-expansion2}
\end{flalign}

{We next bound the third term of \Eqref{eq:ntk-expansion}}. From the convergence proof in the NTK regime, we have $\norm{\Delta \mD_x}_0 < \frac{m}{\log^2 m}$ with probability at least 1 -$e^{-\Omega(m/\log^2 m)}$ \citep[Lemma 8.2]{allen2018convergence}. Hence with high probability $\norm{(\Delta \mD_x) \va} \le \frac{1}{\log m}$. We need the following lemma to get an overall high probability bound.
\begin{lemma}[Lemma~7.3 in \citep{allen2018convergence}]
For all sparse vectors $\vu$ with $\norm{\vu}_0\le O(\frac{m}{\log^2 m})$, we have with probability at least $1-e^{-\Omega(m)}$, 
\begin{flalign}
 |\va^\top \mW \mW^{\top} \vu| \le 2\norm{\vu}.
\end{flalign}
\end{lemma}

Thus, we have 
\begin{flalign}
 |\va^\top \mW \mW^{\top} (\Delta \mD_x \va)|\le \frac{1}{\log m}. \label{eq:ntk-expansion3}
\end{flalign}

{We next bound the fourth term of \Eqref{eq:ntk-expansion}}
\begin{flalign}
 |\va^\top \mW \mW^{\top} \mD_x (\Delta \va)|&\le \norm{\va^\top \mW} \cdot \norm{\mW^\top}_2 \cdot \norm{\mD_x (\Delta \va)} \le 2 \cdot \sqrt{\frac{m}{d}} \cdot \frac{1}{\sqrt{m}} = O(\frac{1}{\sqrt{d}}). \label{eq:ntk-expansion4}
\end{flalign}

For the higher order terms in \Eqref{eq:ntk-expansion}, we can similarly bound them one by one. 

Hence by combining bounds in (\ref{eq:ntk-expansion1}), (\ref{eq:ntk-expansion2}), (\ref{eq:ntk-expansion3}) and (\ref{eq:ntk-expansion4}) together and taking the union bound, we complete the proof.
\end{proof}

Beyond the NTK setting, we discuss the possible extensions here. First it is possible to extend the current results to the multi-layer neural network setting, as it has been demonstrated that a multi-layer neural network near its initialization also behaves like linear function with respect to the input \citep{allen2018convergence,bubeck2021single, montanari2022adversarial}. Second, it is desirable if one can extend the result to the FGSM  with respect to the $l_\infty$ constraint. The extension towards this direction is not obvious based on current technique. One main difficulty is that the sign operation in FGSM would break the Gaussian property of the adversarial noises. One can hardly exploit the Gaussian conditioning to give a proof.  We next do empirical experiments and verify the distributional property of the adversarial noises for all these settings.   
\section{Linear Separability of Adversarial Noises in Practice} \label{sec:exp}

In this section, we empirically verify the linear separability of adversarial noises. Specifically, we will first verify our theories' prediction that the adversarial noises are indeed linearly separable for neural network at/near its random initialization. Then we go beyond the theoretical regime and explore the case where the networks are sufficiently trained and the case where the adversarial noises are generated with multiple-step PGD. We next describe the general setup of our experiments. 

\subsection{General Setup of Experiments}

The target model architecture is the ResNet-18 model in \citet{he2016identity}. We train the target models on the CIFAR-10 dataset \citep{cifar} with standard random cropping and flipping as data augmentation.  All models are trained for 100 epochs with a batchsize of 128. 

To test the linear separability of adversarial noises, we train linear models that use the generated adversarial noises as input and  the labels of corresponding target examples as the their labels. All perturbations are flattened into one-dimensional vectors. The higher the training accuracy of linear models, the better the linear separability. All linear models are trained for 50 steps with the L-BFGS optimizer \citep{liu1989limited}. All the experiments are run with one single  Tesla V100 GPU.


\subsection{Adversarial Noises Within Theoretical Regime}
\label{subsec:exp_theoretical}

We first verify our theoretical findings. For the initialization setup, we use random Gaussian (Kaiming initialization \citep{he2016deep}) to initialize the neural network and test the linear separability of its adversarial noises.

We do not directly work with the neural tangent kernel. Instead we use a small constant learning rate 0.001 to mimic the case that the model is close to initialization across training. We take a snapshot of the model every 10 epochs of training. We generate the adversarial noises with respect to each snapshot and then train a linear classifier on them accordingly.

All adversarial noises are generated with  one-step FGSM.  In addition to the training accuracy of linear models, we also report how well these linear models ``generalize'' to the adversarial noises on test data points, the so-called \emph{test accuracy} of the linear models on adversarial noises.

We plot the result in Figure~\ref{fig:separability-train-test}. As the model is trained, the ResNet's  accuracy on the original training data increases to 100\% steadily. A linear model cannot fit the original training data well but a linear model can fit the adversarial noises perfectly from the initialization to the end of the training. This finding confirms that our theoretical findings do hold in practice.

We also observe that the ability of the linear classifier generalizes to the ``test data'', i.e., the linear model trained on adversarial noises of the training data performs well on the adversarial noises of the test data. This finding implies that although the adversarial noises  are designed with respect to specific samples, they actually introduce a new distribution on $(\vx;y)$ to perturb the original data distribution. This new perspective may inspire new ways to defend against the adversarial noises.


%





\subsection{Adversarial Noises Beyond Theoretical Regime}
\label{subsec:exp_empirical}

In this section, we go beyond the theoretical regime and see how the adversarial noises behave for models/algorithms in the wild.

\textbf{Large learning rate.} We first test the linear separability of adversarial noises  for network that is sufficiently trained with a large learning rate. Specifically for the same ResNet-18 and CIFAR-10, we use learning rate $lr=0.1$ instead of 0.001 in previous subsection so that the model is no longer close to initialization as the training proceeds.  Similarly, we take a snapshot of the model every 10 epochs of training. We generate the adversarial noises with respect to each snapshot and then train a linear classifier on them accordingly. We plot the result in Figure~\ref{fig:separability-train-test-lr0.1}. We can see that  indeed, for this setting, the linear separability of adversarial noises becomes weaker. 

Apart from the reason that the weights move far away from initialization, we  identify that the errors at the last layer become less separable as the training loss becomes small.  At initialization, all the output activation is random and the only signal in the last layer gradient is the label. After training, we gradually learn label's information which makes the signal in the last layer gradient is not that informative and separable. We can alleviate this effect by tuning the softmax temperature of generating adversarial noises as shown in Figure~\ref{fig:separability-train-test-lr0.1}. We note that the temperatures of softmax are only used for generating  adversarial noises while not affecting the training of ResNet models. The larger $T$, the more uniform the softmax output.

For the case of large learning rates, even though the adversarial noises are not perfectly linearly separable, they are easier to classify than the original features, e.g., the accuracy of linear classifier on adversarial noises are higher than that on original features (see Figure~\ref{fig:separability-train-test-lr0.1}). Moreover, if we replace the linear classifier with a  two-layer neural network, the adversarial noises can still be perfectly fit.

 \begin{figure}
\centering
    \begin{minipage}{0.49\textwidth}
        \begin{center}
        \centerline{\includegraphics[width=1.\columnwidth]{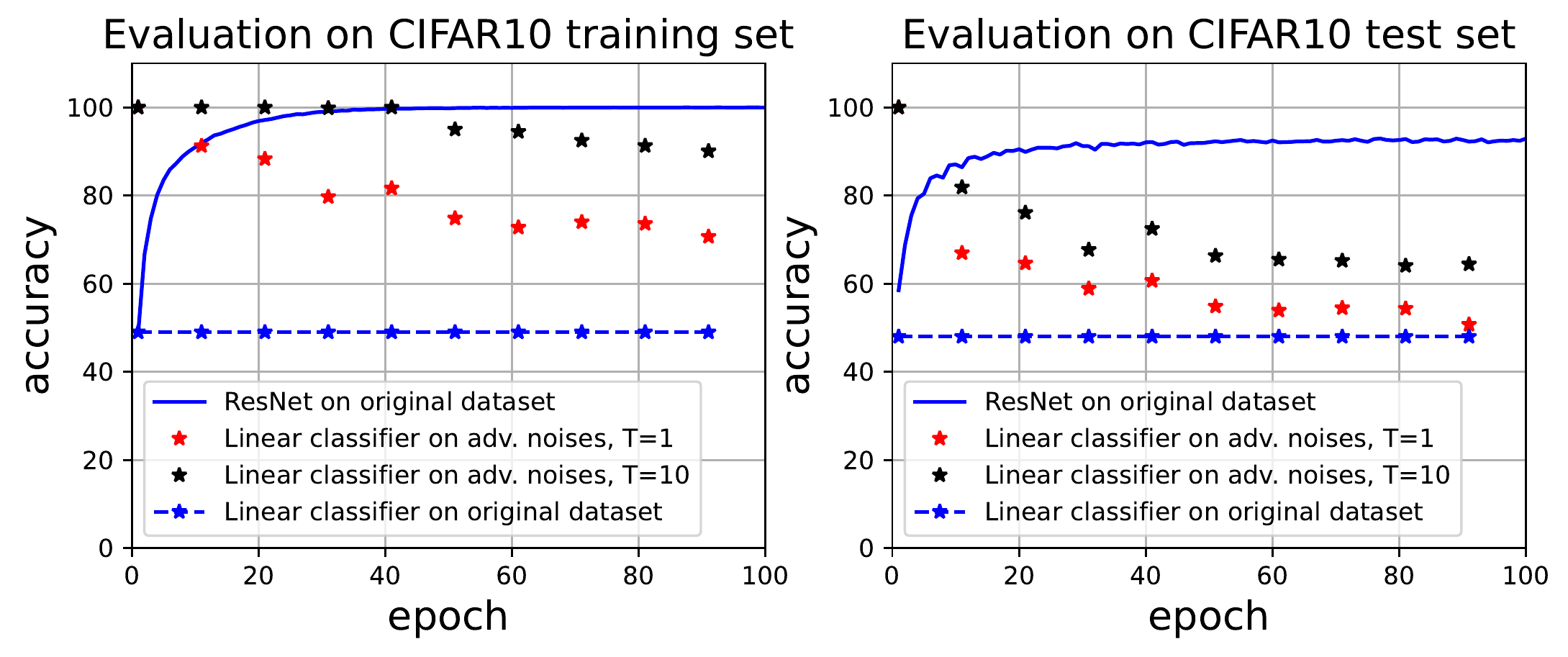}}
        \caption{Training and test accuracy of \textbf{linear models} on adversarial noises. The noises are generated by using ResNet-18 models trained with \textbf{lr=0.1} on  CIFAR-10 and using two  softmax temperatures $T=1$ and $T=10$.}
\label{fig:separability-train-test-lr0.1}
        \end{center}
    \end{minipage}
    \hfill
    \begin{minipage}{0.49\textwidth}
        \begin{center}
        \centerline{
\includegraphics[width=1.0\columnwidth]{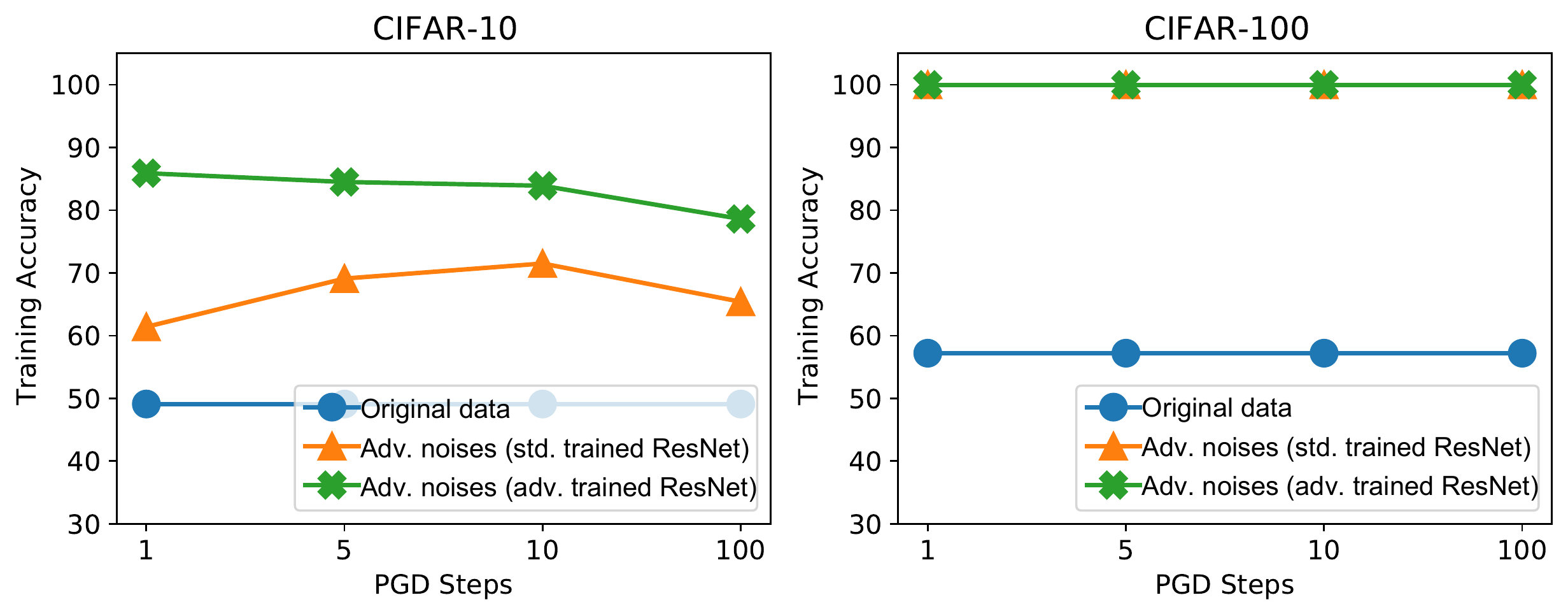}}
        \caption{Training accuracy of linear models on adversarial noises generated with standardly/adversarially trained models.  The blue line is training accuracy  on original data. }
  \label{fig:pgd-models}
        \end{center}
    \end{minipage}
\vspace{-4mm}
\end{figure}




\textbf{Multi-step PGD and adversarially trained models.} In this part, we consider the adversarial noises generated by the final models trained either standardly or adversarially. For adversarial training, we adopt the setup in \citet{madry2018towards} that uses 7 steps of PGD with a stepsize of $2/255$. 

For generating the adversarial noises, we test PGD with 5, 10, and 100 steps.  In addition to CIFAR-10, we also experiment with the CIFAR-100 dataset. We also plot the linear separability of clean data for a comparison.  The results are in Figure~\ref{fig:pgd-models}.

We can see that the number of PGD steps does not affect the linear separability much. For both datasets, the adversarial noises are easier to fit than the original data for both standardly-trained and adversarially-trained models. 

For the CIFAR-10 dataset, the adversarially trained model has substantially better linear separability than the standardly trained model. We speculate that the adversarially trained model has larger training losses and hence larger error at the last layer, which shares similar effect to tuning the temperature as shown in Figure~\ref{fig:separability-train-test-lr0.1}. For the CIFAR-100 dataset, we observe the linear models achieve 100\% accuracy for both standardly and adversarially trained models and linear models also achieve higher accuracy on original CIFAR-100 than the original CIFAR-10 desipite CIFAR-100 is  more challenging. This may be because the linear model for CIFAR-100 has 10 times more parameters than that of CIFAR-10 (because the number of parameters of the linear model depends on the number of classes).

Although the adversarial noises may not be perfectly linearly separable for these wild scenarios, one consistent message is that the adversarial noises are still much easier to fit than original data. The linear classifier still generalizes to the adversarial noises on test data to some extend, which indicates adversarial noises inject distributional perturbation to the original data distribution. There are many other settings to explore, e.g., different model structures, different training algorithms (standard training or adversarial training), which are suitable for future study. 










\section{Discussion and Conclusion}
In this paper, we unveil a phenomenon that adversarial noises are almost linearly separable if equipped with target labels. We theoretically prove why this happens for two-layer randomly initialized neural networks. One key message is that the adversarial noises are easy to fit for no matter nearly random network or fully trained network. Such easy-to-fit property of adversarial noises make them create a kind of \emph{shortcut}  during adversarial training. Hence the neural network may fit this adversarial noises rather than the true features. This may partially answer why adversarial training is not that efficient for learning original features, which usually leads to deteriorated performance on clean test data. We think that such a distributional perspective of adversarial noises calls for further study to  understand the difficulty of adversarial learning or to improve the current adversarial training algorithms.

\newpage

\appendix

\newpage
\section{Some Proofs in Theorem \ref{thm:initialization-results}} \label{app:thm:initialization-results}

\subsection{The Independent Case}
We first prove the high probability bound for the case: {$\mD_x$ is  random and independent from all others $\{\mW, \va, \vx\}$.}, which is restated as the following lemma.
\begin{lemma}
Suppose that $\mW\in \sR^{m\times d}$ whose entries are i.i.d. sampled from $\gN(0,1/d)$, $\va\in \sR^{m}$ whose entries are i.i.d. sampled from $\gN(0,1/m)$, $\mD$ is a diagonal matrix whose diagonal entries are i.i.d., sampled from $Bernoulli(\frac{1}{2})$. We further assume that $\va, \mW$ and  $\mD$ are mutually independent. Then we have with probability at least $1-3Cn(e^{-c_1 d}+e^{-c_2 m})$ where $C, c_1, c_2$ are some constants, 
\begin{flalign}
\va^\top\mW\mW^\top \mD \va > \frac{1}{32}. 
\end{flalign}
\end{lemma}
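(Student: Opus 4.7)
The plan is to condition on $(\va,\mD)$ and exploit the column-wise independence of $\mW$ to express $Q := \va^\top\mW\mW^\top\mD\va$ as a sum of $d$ i.i.d.\ scalars, reducing the problem to two essentially one-dimensional concentration estimates. Set $\vu := \mW^\top\va\in\sR^d$ and $\vw := \mW^\top\mD\va\in\sR^d$, so $Q = \vu^\top\vw = \sum_{j=1}^{d}\vu_j\vw_j$. Since the columns of $\mW$ are independent $\gN(\vzero,\tfrac{1}{d}\mI_m)$ vectors, the pairs $(\vu_j,\vw_j)_{j=1}^{d}$ are, conditional on $(\va,\mD)$, i.i.d.\ mean-zero bivariate Gaussians with covariance
\[
\tfrac{1}{d}\begin{pmatrix}\|\va\|^2 & \|\mD\va\|^2 \\ \|\mD\va\|^2 & \|\mD\va\|^2\end{pmatrix}.
\]
Hence $Q$ is a sum of $d$ i.i.d.\ sub-exponential variables, each with conditional mean $\|\mD\va\|^2/d$ and $\psi_1$-norm of order $\|\va\|\,\|\mD\va\|/d$; in particular $\E[Q\mid\va,\mD]=\|\mD\va\|^2$.

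First I would pin down a constant lower bound on the conditional mean $\|\mD\va\|^2 = \sum_{k=1}^{m} D_{kk}\va_k^2$. This is a sum of $m$ i.i.d.\ sub-exponential variables with mean $\tfrac{1}{2m}$ and $\psi_1$-scale $O(1/m)$, so Bernstein's inequality yields $\Pr[\|\mD\va\|^2 < 1/4]\le 2e^{-c_2 m}$. Simultaneously $\Pr[\|\va\|^2>2]\le e^{-c_2' m}$ via the standard $\chi^2_m$-tail. Let $\gE$ denote the intersection of these two $(\va,\mD)$-events; on $\gE$ the $\psi_1$-scale of the summands appearing in the first paragraph is uniformly $O(1/d)$, which is the input needed for a clean $d$-dependent tail.

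On $\gE$ I would then apply a one-sided Bernstein inequality to $Q$ conditional on $(\va,\mD)$: a sum of $d$ sub-exponentials of scale $O(1/d)$ concentrates at scale $O(1/\sqrt{d})$ with exponent $cd$, so
\[
\Pr\bigl[\,Q\le \|\mD\va\|^2-\tfrac{1}{8}\,\bigm|\,\va,\mD\,\bigr]\le 2e^{-c_1 d}.
\]
Combining with Step~1 gives $Q\ge \|\mD\va\|^2-\tfrac{1}{8}\ge \tfrac{1}{4}-\tfrac{1}{8}=\tfrac{1}{8}>\tfrac{1}{32}$ with probability at least $1-C(e^{-c_1 d}+e^{-c_2 m})$, after a union bound over the failure of $\gE$ and of the Bernstein event. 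The extra factor of $n$ in the lemma statement is then recovered by a union bound over the $n$ data-point-specific diagonals that the outer proof of Theorem~\ref{thm:initialization-results} uses.

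The main obstacle is ensuring that the Bernstein constant in the second display is truly universal rather than sample-dependent. Because the $\psi_1$-scale $K$ of $\vu_j\vw_j$ depends on the random norms $\|\va\|$ and $\|\mD\va\|$, the $e^{-c_1 d}$ tail would pick up a parasitic $m$-dependence if the inequality were applied unconditionally. Restricting to the good event $\gE$ before invoking the exponential inequality removes this coupling and is precisely what lets the two factors $e^{-c_1 d}$ and $e^{-c_2 m}$ decouple as claimed; this is the reason for introducing $\gE$ explicitly rather than treating the conditional moments as ``roughly $1/2$''.
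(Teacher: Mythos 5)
Your proof is correct, and it reaches the same form of bound by the same overall strategy---condition on $(\va,\mD)$, pin down $\norm{\mD\va}^2$ and $\norm{\va}^2$ with failure probability $e^{-\Omega(m)}$, then run a sub-exponential concentration over the $d$ columns of $\mW$ with failure probability $e^{-\Omega(d)}$---but your decomposition of the quadratic form is genuinely different from the paper's. The paper splits $\va^\top\mW\mW^\top\mD\va = \va^\top\mD\mW\mW^\top\mD\va + \va^\top(\mI-\mD)\mW\mW^\top\mD\va$: the first piece is the nonnegative quantity $\norm{\mW^\top\mD\va}^2$, conditionally a scaled $\chi^2_d$, lower-bounded by $\norm{\mD\va}^2/2 > 1/16$ via a $\chi^2$ tail; the second piece is a sum over columns of products of \emph{independent} Gaussians (independence coming from the orthogonality of $\mD\va$ and $(\mI-\mD)\va$), which is mean-zero and bounded in absolute value by $1/32$ via the sub-exponential Bernstein inequality. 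You instead leave the form whole, write it as $\sum_j \vu_j\vw_j$ with $(\vu_j,\vw_j)$ correlated bivariate Gaussians, and apply a single Bernstein bound around the nonzero conditional mean $\norm{\mD\va}^2$. Your route is more economical---one concentration inequality instead of a $\chi^2$ tail plus a separate cross-term bound---at the cost of needing the $\psi_1$ control for a product of \emph{correlated} Gaussians (i.e.\ $\norm{XY}_{\psi_1}\le\norm{X}_{\psi_2}\norm{Y}_{\psi_2}$ without independence), whereas the paper's split makes the two factors in each product independent so the sub-exponential norm computation is entirely elementary. Your explicit insistence on restricting to the good event $\gE$ \emph{before} invoking the exponential inequality, so that the $\psi_1$ scale of the summands is uniformly $O(1/d)$, plays exactly the role of the events $E_2$ and $E_3$ in the paper's argument and is indeed what keeps the $e^{-c_1 d}$ and $e^{-c_2 m}$ factors decoupled; the final union bound over the $n$ samples is likewise handled the same way in both arguments.
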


\begin{proof}
We note that 
\begin{flalign}
 \va^\top\mW \mW^\top \mD \va = \va^\top\mD\mW \mW^\top \mD \va + \va^\top(\mI-\mD)\mW \mW^\top \mD \va.
\end{flalign}

For the first term, $\va^\top\mD\mW \mW^\top \mD \va=\norm{\mW^\top \mD \va}^2$.  Given $\mD\va$, we have $\mW^\top \mD \va \sim \gN\left(0, \frac{\norm{\mD\va}^2}{d}\mI_{d\times d}\right)$ and hence $ \norm{\mW^\top \mD \va}^2 \stackrel{d}{=} \frac{\norm{\mD\va}^2}{d}\chi_d^2$. 

We need a bound on the tail probability of $\chi_d^2$. \begin{lemma}
Suppose $X\sim \chi_d^2 $, i.e., chi square distribution with freedom $d$. Then  we have
\begin{flalign}
&\sP\{X < zd\}\leq (ze^{1-z})^{d/2}, \;\; \text{for } z<1,\\
 &\sP\{X > zd\}\leq (ze^{1-z})^{d/2}, \;\; \text{ for } z>1.
\end{flalign}
\end{lemma}

Hence 
\begin{flalign}
 \sP\left\{\norm{\mW^\top \mD \va}^2> \frac{\norm{\mD\va}^2}{d} \cdot \frac{d}{2}\right\}\ge 1- \left(\frac{e^{1/2}}{2}\right)^{d/2}>1- e^{-m/11}.
\end{flalign}


We note that $\norm{\va}^2 \sim \frac{1}{m} \chi_m^2$. Hence $\sP\{\norm{\va}^2< z\}\le(ze^{1-z})^{m/2}$ for $z<1$ and $\sP\{\norm{\va}^2>z\}\le(ze^{1-z})^{m/2}$ for $z>1$.

The diagonal entries of $\mD$ are Bernoulli random variables, and hence $\Tr(\mD)$ is a Binomial random variable with parameter $(m,\frac{1}{2})$. Due to the Hoeffding-type tail bound of Binomial random variable, we have for $z<1/2$
\begin{flalign}
\sP\{\Tr(\mD) < zm\} < \exp\left(-2m\left(\frac{1}{2}-z\right)^2\right).
\end{flalign}

Define an event $E_1:=\{\Tr(\mD)>\frac{1}{4}m\}$ and then its probability is at least $1-e^{-m/8}$. On event $E_1$, we can show $\sP\{\norm{\mD\va}^2>1/8\}>1-e^{-(\log\sqrt{2}-\frac{1}{4})m}> 1-e^{-m/11}$. Then define another event $E_2 := \{\norm{\mD\va}^2>1/8\}$ whose probability is at least $1-e^{-m/8}-e^{-m/11}$. 

Hence for the first term we have with probability at least $1-e^{-m/8}-2e^{-m/11}$
\begin{flalign}
\va^\top \mD\mW\mW^\top \mD\va >\frac{1}{16}.
\end{flalign}

For the second term,  let $D$ denote the set of index $j$ that $\mD_{j,j} =1$,  $\bar{D}$ denote the set of index $j$ that $\mD_{j,j} =0$ and $\vw_k$ denote the vector of the $k$-th column of $\mW$. Given $\{\mD, \va\}$ we have 
\begin{flalign}
\va^\top(\mI-\mD)\mW \mW^\top \mD \va = \sum_{k=1}^d (\vw_{k, \bar{D}}^\top\va_{\bar{D}})(\vw_{k,D}^\top\va_D) 
\end{flalign}
We note that $\vw_{k, \bar{D}}^\top\va_{\bar{D}} \sim \gN(0, \frac{\norm{\va_{\bar{D}}}^2}{d})$ and $\vw_{k,D}^\top\va_D\sim \gN(0, \frac{\norm{\va_{D}}^2}{d})$. They are independent from each other and their product is a sub-exponential random variable, with sub-exponential norm $K= \frac{2\norm{\va_{\bar{D}}}\norm{\va_{D}}}{\pi d}$.

\begin{definition}
The sub-exponential norm of $X$ is defined to be 
\begin{flalign}
\norm{X}_{\psi_1} = \sup_{p\ge 1}p^{-1}(\E|X|^p)^{1/p}.
\end{flalign}
\end{definition}
For the sum of sub-exponential random variables, we have the following Bernstein-type  bound.
\begin{lemma}[Corollary 5.17 in \citep{vershynin2010introduction}]
Let $X_1, ..., X_N$ be independent centered sub-exponential random variables, and let $K = \max_i \norm{X_i}_{\psi_1}$. Then, for every $\epsilon\ge 0$, we have
\begin{flalign}
\sP\left\{\left|\sum_{i=1}^N X_i\right|\ge \epsilon N\right\} \le 2 \exp\left[-c\min\left(\frac{\epsilon^2}{K^2}, \frac{\epsilon}{K}\right)N\right]
\end{flalign}
where $c>0$ is an absolute constant.
\end{lemma}

Using the sub-exponential Bernstein inequality, we have 
\begin{flalign}
\sP\left\{\left|\sum_{k=1}^d (\vw_{k, \bar{D}}^\top\va_{\bar{D}})(\vw_{k,D}^\top\va_D) \right|\ge \epsilon d\right\} \le 2 \exp\left[-c\min\left(\frac{\epsilon^2}{K^2}, \frac{\epsilon}{K}\right)d\right]. \label{eq:subexponential}
\end{flalign}

Define an event $E_3 := \{\norm{\va}^2< 2\}$ whose probability is at least $1-e^{-(0.5-\log \sqrt{2})m}>1-e^{-m/7}$.

On the intersection of $E_2$ and $E_3$, we have $\norm{\va_{D}}^2\ge \frac{\norm{\va}^2}{16}$ and hence $\frac{\norm{\va_{D}}^2}{\norm{\va_{\bar{D}}}^2}\ge \frac{1}{15}$, whose probability is at least $1-e^{-m/8}-e^{-m/11} -e^{-m/7}>1-3e^{-m/7}$.

On the event of $E_2 \cap E_3$ and taking $\epsilon = \frac{\norm{\mD \va}^2}{4d}$, the probability in \Eqref{eq:subexponential} is smaller than $2\exp\left[-c \frac{\pi^2 d}{960}\right]$.

Hence for the second term, we have with probability at least $1-3e^{-m/7} -2e^{-c\pi^2d/960}$,
\begin{flalign}
 |\va^\top(\mI-\mD)\mW \mW^\top \mD \va|\le \frac{1}{32}.
\end{flalign}

Hence combining with the bound on the first term, we have that $\va^\top\mW\mW^\top \mD \va \ge \frac{1}{32}$ holds with probability at least $1-e^{-m/8}-2e^{-m/11}-3e^{-m/7} -2e^{-c\pi^2d/960}$.  By taking the union bound over the training sample $n$, we complete the proof.

\end{proof}

\subsection{Proof of Lemma \ref{lem:innerproduct}}
\begin{proof}
We note that $\vh \sim \gN(0, \mI_{m})$. Hence because of the tail bound of the $\chi_m^2$, we have 
\begin{flalign}
\sP\{\norm{\vh}^2\le 2m \}> 1- (2e^{-1})^{m/2}> 1-e^{-m/7}.
\end{flalign}
Given $\vh$, we have $\va^\top\vh \sim \gN(0, \norm{\vh}^2/m)$. On the event of $\{\norm{\vh}^2\le 2m\}$, for some constant $c_2$, $\sP\{|\va^\top \vh|<\sqrt{c_2 d}\}> 1-2 \Phi(-\sqrt{c_2 d/2})>1-2e^{-c_2 d/4}$. Hence we have 
\begin{flalign}
|\va^\top\vh| < \sqrt{c_2d}
\end{flalign}
holds with probability at least $1-e^{-m/7}- 2e^{-c_2 d/4}$. 

On the event of $\{\norm{\vh}^2\le 2m\}$,  we have that $\sP\{\norm{\mD_x\vh}^2\le 2m \} =1$ and $\sP\{|\va^\top \mD_x\vh|<\sqrt{c_2 d}\}>1-2e^{-c_2 d/4}$. 

Combining the above two terms together, we complete the proof. 
\end{proof}

\end{document}